\documentclass[conference]{IEEEtran}
\IEEEoverridecommandlockouts

\usepackage{cite}
\usepackage{amsmath,amssymb,amsfonts}
\usepackage{algpseudocode}
\usepackage{algorithm}
\usepackage{graphicx}
\usepackage{textcomp}
\usepackage{xcolor}
\def\BibTeX{{\rm B\kern-.05em{\sc i\kern-.025em b}\kern-.08em
    T\kern-.1667em\lower.7ex\hbox{E}\kern-.125emX}}

\newcommand{\bigtimes}{\mathop{\vcenter{\hbox{\scalebox{1.5}{$\times$}}}}}

\newcommand{\QED}{\hfill\ensuremath{\blacksquare}}

\usepackage{booktabs}
\usepackage{caption}
\usepackage{subcaption}
\usepackage{multirow}
\usepackage{hyperref}
\usepackage{tikz}
\usetikzlibrary{trees,arrows.meta,positioning}

\newcommand{\U}{\mathcal{U}}
\newcommand{\R}{\mathbb{R}}
\newcommand{\E}{\mathbb{E}}
\newcommand{\1}{\mathbf{1}}

\newcommand{\ubar}[1]{\overline{#1}}
\newtheorem{proposition}{Proposition}

\newtheorem{example}{Example}

\newtheorem{remark}{Remark}
\newtheorem{proof}{Proof}

\renewcommand{\vec}[1]{\boldsymbol{#1}}

\begin{document}

\title{Interval-valued SHAP in Tree-Based Models}



\author{
\IEEEauthorblockN{
Chenrui Zhu\IEEEauthorrefmark{1},
Vu-Linh Nguyen\IEEEauthorrefmark{1},
Marie-Hélène Masson\IEEEauthorrefmark{1}\IEEEauthorrefmark{2},
Sébastien Destercke\IEEEauthorrefmark{1}
}

\IEEEauthorblockA{
\IEEEauthorrefmark{1}
\textit{Université de technologie de Compiègne, UMR-CNRS 7253 Heudiasyc}\\
Compiègne, France
}

\IEEEauthorblockA{
\IEEEauthorrefmark{2}
\textit{IUT de l’Oise, Université de Picardie Jules Verne}\\
Beauvais, France\\
\{chenrui.zhu, vu-linh.nguyen, mylene.masson, sebastien.destercke\}@hds.utc.fr
}
}

\maketitle

\begingroup
\renewcommand{\thefootnote}{}
\footnotetext{}
\endgroup

\begingroup
\renewcommand{\thefootnote}{}
\footnotetext{%
\textit{Accepted for publication in the Proceedings of the 2026 IEEE International Conference on Data Mining (ICDM 2026).} 
\copyright\ 2026 IEEE.  Personal use of this material is permitted.  Permission from IEEE must be obtained for all other uses, in any current or future media, including reprinting/republishing this material for advertising or promotional purposes, creating new collective works, for resale or redistribution to servers or lists, or reuse of any copyrighted component of this work in other works.
}
\endgroup

\begin{abstract}

Shapley values are among the most popular feature-attribution explanations. Efficient approaches for computing/estimating Shapley values for tree-based models, which are state-of-the-art for tabular data sets, have been developed. However, it is known that Shapley values can be (highly) unrobust due to small and realistic changes. In this paper, we propose an imprecise Dirichlet model (IDM) based method to analyze the robustness of Shapley values in decision trees and random forests. Technically, it is done by quantifying and analyzing the interval-valued Shapley values when a few unannotated instances are randomly introduced to the leaves of the trees. The interval-valued Shapley values can be defined following common principles in handling incomplete data: the pessimistic and averaging principles. We derive various theoretical results that lead to efficient computation of the interval-valued Shapley values. We also show that the proposed method can be straightforwardly generalized to the case of Banzhaf values. We then present various case studies and experiments to illustrate the behaviour of the proposed interval-valued Shapley values and their applications in debiasing uninformative features.

\end{abstract}

\begin{IEEEkeywords}
Explainable AI, robust explanations, interval-valued attributions, tree-based models
\end{IEEEkeywords}

\section{Introduction}\label{sec:introduction}



Explanation methods aim to make predictive models more transparent by providing human-interpretable summaries of their behavior. Feature-attribution explanations are among the most popular explanations. For each instance, they assign to each feature a real-valued score quantifying its attribution, i.e., its contribution to the prediction given by the predictive model. The most popular feature-attribution explanations are SHapley Additive exPlanations (SHAP), whose attributions are characterized by axioms such as local accuracy and consistency \cite{lundberg2018consistent,10.5555/3295222.3295230}. Efficient approaches, notably TreeSHAP \cite{lundberg2018consistent,lundberg2019explainable}, for computing Shapley values for tree-based models, which are state-of-the-art for tabular data sets \cite{grinsztajn2022tree}, have been developed. This has led to the broad adoption of Shapley-style explanations for tree-based models, notably gradient-boosted trees and random forests~\cite{lundberg2018consistent,lundberg2019explainable}.

However, it is known that Shapley values can be fragile, e.g., small, realistic input perturbations can cause large shifts in the attributions \cite{kasbohrer2026assessing}. Their fragility would be further amplified in the case of tree-based models due to the fact that small perturbations to the training data can significantly change the trees  \cite{last2002improving}, and thus the attributions. These have led to various attempts to assess the robustness of Shapley values for tree-based models \cite{kasbohrer2026assessing,utkin2021imprecise}. A notable attempt is the imprecise SHAP \cite{utkin2021imprecise}, developed for imprecise-probability classifiers. The approach starts by assuming that the classifiers predict credal sets and employs the KL divergence (instead of the difference) to define the Shapley values. Imprecise outputs of the classifiers naturally lead to interval-valued Shapley values, which essentially reflect the uncertainty (due to insufficient training data). While this is a generic approach with interesting theoretical properties, it is unclear how to leverage it to assess the robustness of Shapley values for tree-based models. In particular, due to differences in their notions of Shapley values, it is unclear under which conditions imprecise Shapley values can cover the Shapley values of tree-based models, which might be a crucial requirement for robustness indicators.      

In this paper, we focus on assessing the robustness of Shapley values when the stability of the tree-based models is of utmost importance, notably for the purpose of interpretability. Note that in safety-critical applications, such as in the medical domain \cite{banerjee2019tree,xu2023interpretability}, understanding the structure of the trees is important. In other words, it might be advisable to guarantee the model stability unless additional data coming at the deployment phase suggests the need for model retraining, e.g., due to unsatisfactory generalization ability. This is because understanding the structure of the trees can be costly, but it can enhance the trustworthiness of the decision support system. Model stability in random forests is also important for interpretability, notably when the structures of its trees are leveraged to extract rules~\cite{tetteh2025optimization}. 

Note that with fixed structures, the Shapley values for some individuals can also be very fragile due to small perturbations to training data. This will be confirmed in our empirical study. This naturally calls for methods for assessing the Shapley values for the individuals, which should compactly present, for each instance, the possible changes in all the attributions. To serve this purpose, we propose an imprecise Dirichlet model (IDM) ~\cite{BERNARD2005123} based method to analyze the robustness of Shapley values in decision trees and random forests. The overall idea is to associate the robustness of each Shapley value with an interval-valued Shapley value, which covers the original value, induced by introducing a small number of unannotated instances into the training data. This can be implemented following common principles in handling incomplete data: the pessimistic and averaging principles, see, e.g., \cite{do2026probabilistic} and references therein. The corresponding problems of computing interval-valued Shapley values are combinatorial optimization problems, where Brute-force search is impractical.    

We present various theoretical results that allow efficient computation of interval-valued Shapley values in the case of decision trees. For each instance, we show that the Shapley value associated with each feature can be expressed as a linear combination of the leaf values. 
The efficiency of the averaging principle further calls for efficiency in computing expected values of multinomial distributions. The efficiency of the pessimistic principle is achieved by leveraging the linearity of the Shapley values and the property that, for each leaf, the lower (upper) leaf value provided by the IDM model is a convex decreasing (concave increasing) function in the number of unannotated instances. We then show that the mentioned results can be extended to the case of Banzhaf values straightforwardly. Finally, we extend the results achieved for decision trees to random forests.    

After recalling the basics of Shapley values and tree-based models in Section~\ref{sec:background}, Section~\ref{sec:leaf-perturbation} presents our IDM-based method for deriving interval-valued Shapley values. 
Section~\ref{sec:experiments} presents various case studies and experiments to illustrate the behaviour of the proposed interval-valued Shapley values and their applications in debiasing uninformative features \cite{loecher2024debiasing}. Finally, Section \ref{sec:conclusion} concludes this work and sketches out potential follow-up work.

\section{Background}\label{sec:background}
We consider decision-tree predictors and game-theoretic feature-attribution explanations. $\mathcal{U}:=\{1,\dots,n\}$ denotes the set of feature indices and
$\mathcal{X}\subseteq\R^n$ the input space. We fix an instance $\vec{x} = (x_1, \ldots, x_n) \in\mathcal{X}$ to be explained.

\subsection{Decision-tree models}\label{subsec:background-trees}

A binary decision tree is a rooted directed tree whose internal nodes perform threshold tests on single features. Each internal node \(v\) is labeled by a split feature \(d(v)\in\mathcal U\) and a threshold \(th(v)\in\mathbb R\). Given an input \(\vec{x}\in\mathcal X\), the traversal follows the left child if \(x_{d(v)}<th(v)\) and the right child otherwise, until a leaf is reached. Let \(\mathcal L\) denote the set of leaves of the tree trained on the training data.

Throughout the paper, we distinguish the \emph{tree structure} \(T\), consisting of the topology together with the split variables and thresholds, from the \emph{leaf outputs}. The structure is kept fixed, while the leaf outputs will later be allowed to vary. Each leaf \(\ell\in\mathcal L\) carries a scalar value
\(w_\ell\in\mathbb R\), such as a regression mean, a target-class probability, or a
logit. We collect these values in the vector \(w := (w_\ell)_{\ell\in\mathcal L}\in\mathbb R^{|\mathcal L|}.\) In our case, as we are concerned with supervised learning, we will focus on target-class probability.

Let \(\ell_T(\vec{x})\in\mathcal L\) be the leaf reached by \(\vec{x}\), and let \(R_\ell := \{\vec{x}\in\mathcal X:\ell_T(\vec{x})=\ell\}\) be the set of all inputs that would be routed to leaf \(\ell\). Then the prediction function admits the leaf-sum representation
\begin{equation}\label{eq:tree-leaf-sum-bg}
f_{T,w}(\vec{x})
=
w_{\ell_T(\vec{x})}
=
\sum_{\ell\in\mathcal L} w_\ell\,\mathbf 1\{\vec{x}\in R_\ell\}.
\end{equation}
i.e., the output is the value attached to the leaf selected by traversal \cite{molnar2022interpretable}.

\subsection{Feature attribution as a coalitional game}\label{subsec:game}

Local feature-attribution methods such as SHAP can be formulated as cooperative games over features \cite{10.5555/3295222.3295230,lundberg2018consistent,aas2021dependent}. Let \(\mathbf{X}=(X_1,\dots,X_n)\sim D\) be an \(\mathcal X\)-valued background random vector. For each subset \(S\subseteq\mathcal U\), we denote by \(\bar S:=\mathcal U\setminus S\). We use the interventional (marginal) value function
\begin{equation}\label{eq:interventional-game-bg}
v_w(S)
:=
\E_{\mathbf{X}_{\bar S}}\!\left[
f_{T,w}\big(\vec{x}_S,\mathbf{X}_{\bar S}\big)
\right],
\end{equation}
where the hybrid vector \((\vec{x}_S,\mathbf{X}_{\bar S})\) fixes the coordinates in \(S\) to their values in the explained instance and samples the remaining coordinates from the background distribution \(D\). Equivalently, this semantics is often written as \(v_w(S)=\E_{\mathbf{X}_{\bar S}}[f_{T,w}(\mathbf{X})\mid \mathrm{do}(\mathbf{X}_S=\vec{x}_S)],\) emphasizing that the features in \(S\) are intervened on rather than conditioned on.

The baseline is \(v_w(\emptyset)=f_{T,w}(\vec{x})\). Other missing-feature semantics can also be used, including conditional expectations for dependent features \cite{aas2021dependent} and path-dependent semantics for tree models \cite{lundberg2018consistent,pmlr-v180-karczmarz22a}. Our results apply to any
semantics for which \(v_w(S)\) can be written as an expectation of \(f_{T,w}\) under a distribution that does not depend on the leaf-output vector~\(w\).

\subsection{Shapley attributions}

Given a set function $v:2^\U\to\R$, the Shapley attribution of feature \(i\in\mathcal U\) is:
\begin{equation}\label{eq:shapley-bg}
\phi_i
\;:=\;
\sum_{S\subseteq \U\setminus\{i\}}
\frac{|S|!\,(n-|S|-1)!}{n!}\,
\big(v(S\cup\{i\})-v(S)\big).
\end{equation}

The Shapley value is uniquely characterized by axioms \cite{molnar2022interpretable} such as efficiency, symmetry, linearity, and dummy. A standard convention defines an intercept $\phi_0:=v(\emptyset)$ so that the \emph{efficiency} identity reads
\begin{equation}\label{eq:efficiency-bg}
\begin{aligned}
& \phi_0 + \sum_{i\in\U}\phi_i \;=\; v(\U)\;=\; f_{T,w}(\vec{x}),
\\ \text{equivalently} & \qquad \sum_{i\in\U}\phi_i \;=\; v(\U)-v(\emptyset).
\end{aligned}
\end{equation}

\begin{remark} 
An alternative game-theoretic attribution is the raw Banzhaf value, which averages marginal contributions uniformly 
\[
\beta_i
:=
\frac{1}{2^{n-1}}
\sum_{S\subseteq \U\setminus\{i\}}
\big(v(S\cup\{i\})-v(S)\big).
\]
Unlike the Shapley value, the raw Banzhaf value generally does not satisfy the efficiency identity \(\sum_{i\in\mathcal U}\beta_i = v(\mathcal U)-v(\emptyset).\) Nevertheless, it has attractive computational properties for tree models and often produces feature rankings similar to Shapley attributions in practice
\cite{pmlr-v180-karczmarz22a}.  
\end{remark}

\subsection{Leaf-linear representation of the game and attributions}\label{sec:leaf-linearity}
Recall from Section~\ref{sec:background} that a fixed tree structure $T$ together with leaf values
$w=(w_\ell)_{\ell\in\mathcal{L}}$ defines the predictor $f_{T,w}$. Fixing $T$, an instance $\vec{x}\in\mathcal{X}$ and a background distribution $D$ and the missing-feature semantics, for each coalition $S\subseteq\mathcal{U}$, the induced game value $v_w(S)$ depends on $w$ through the leaf outputs. This section shows that $v_w(S)$ is linear in $w$, and consequently that the attribution scores are linear forms in $w$.

\subsubsection{Linearity of the coalition value $v_w(S)$}
For a coalition $S\subseteq\U$, recall the associated game value
\begin{equation}\label{eq:vw-def}
v_w(S)\;:=\;\E_{\mathbf{X}_{\bar S}}\!\left[f_{T,w}\big(\vec{x}_S, \mathbf{X}_{\ubar S}\big)\right].
\end{equation}
This is the expected prediction of the tree when the features in $S$ are fixed at their values in the explained instance $\vec{x}$, while the remaining features $\ubar S:=\U\setminus S$ are integrated out according to the background distribution.
Hence, $v_w(S)$ quantifies the predictive contribution of knowing the features in $S$. Since the tree prediction is piecewise constant over leaves, $v_w(S)$ can be written in terms of the probabilities of reaching each leaf. For every leaf $\ell\in\mathcal L$, define the \emph{reach probability}
\begin{align}
q_\ell(S) &\;:=\;\mathbb P\Big(\big(\vec{x}_S,\mathbf{X}_{\ubar S}\big)\in R_\ell\Big)
\; \nonumber\\
&=\;
\E_{\mathbf{X}_{\bar S}}\!\left[\1\Big\{\big(\vec{x}_S,\mathbf{X}_{\ubar S}\big)\in R_\ell\Big\}\right]. \label{eq:q-def}
\end{align}
Let $q(S):=(q_\ell(S))_{\ell\in\mathcal L}\in[0,1]^{|\mathcal L|}$. The quantity $q_\ell(S)$ is the probability that the hybrid point $\big(\vec{x}_S,\mathbf{X}_{\ubar S}\big)$ falls into leaf $\ell$. It depends on the tree structure, the explained instance $\vec{x}$, the coalition $S$, the background distribution and the missing-feature semantics, but crucially it does \emph{not} depend on the leaf values $w$. Thus $q(S)$ captures how the coalition $S$ distributes probability mass over the leaves, whereas $w$ determines the prediction associated with each leaf.

\begin{proposition}\label{prop:vw-linear}
For every coalition $S\subseteq\U$,
\begin{equation}\label{eq:vw-linear}
v_w(S)\;=\;\sum_{\ell\in\mathcal L} q_\ell(S)\,w_\ell
\;=\;\langle q(S),w\rangle.
\end{equation}
In particular, for each fixed $S$, the map $w\mapsto v_w(S)$ is linear.
\end{proposition}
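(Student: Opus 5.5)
The plan is to substitute the leaf-sum representation \eqref{eq:tree-leaf-sum-bg} into the definition \eqref{eq:vw-def} and then pull the finite sum over leaves, together with the constants $w_\ell$, outside the expectation.

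\emph{Pointwise identity.} For a fixed coalition $S$ and every realization of $\mathbf{X}_{\bar S}$, the hybrid point $(\vec{x}_S,\mathbf{X}_{\bar S})$ lies in $\mathcal{X}$. By \eqref{eq:tree-leaf-sum-bg},
\[
f_{T,w}\big(\vec{x}_S,\mathbf{X}_{\bar S}\big)=\sum_{\ell\in\mathcal L} w_\ell\,\1\Big\{\big(\vec{x}_S,\mathbf{X}_{\bar S}\big)\in R_\ell\Big\}.
\]
This holds because the regions $R_\ell$ partition $\mathcal X$: traversal is deterministic, so each input reaches exactly one leaf.

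\emph{Linearity of expectation.} The sum over $\mathcal L$ is finite. Each summand is a constant $w_\ell$ times a bounded, measurable indicator, measurable because $R_\ell$ is a finite intersection of axis-aligned half-spaces. Hence every term is integrable, and the expectation of the sum equals the sum of the expectations:
\[
v_w(S)=\sum_{\ell\in\mathcal L} w_\ell\,\E_{\mathbf{X}_{\bar S}}\!\left[\1\Big\{\big(\vec{x}_S,\mathbf{X}_{\bar S}\big)\in R_\ell\Big\}\right]=\sum_{\ell\in\mathcal L} q_\ell(S)\,w_\ell,
\]
where the last step is the definition \eqref{eq:q-def}. This is the inner product $\langle q(S),w\rangle$.

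\emph{Linearity in $w$.} The vector $q(S)$ is determined by $T$, $\vec{x}$, $S$ and the background distribution, and does not involve $w$. Therefore $w\mapsto\langle q(S),w\rangle$ is a linear functional on $\R^{|\mathcal L|}$.

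No step is a real obstacle. The only point needing care is the remark that the result extends to any missing-feature semantics in which $v_w(S)$ is an expectation of $f_{T,w}$ under a law $P_S$ that does not depend on $w$. For that case I would rerun the same argument with $q_\ell(S):=P_S(R_\ell)$, which requires only that $P_S$ is a probability measure on $\mathcal X$ independent of $w$.
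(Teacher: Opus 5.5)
Your proposal is correct and matches the paper's proof: insert the leaf-sum decomposition of $f_{T,w}$ into the definition of $v_w(S)$, exchange the finite sum with the expectation, and recognize the remaining expectation as $q_\ell(S)$. Your remarks on the measurability of $R_\ell$, the integrability of the indicators, and the extension to other $w$-independent missing-feature semantics only spell out what the paper leaves implicit.
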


\begin{proof}
Insert the leaf decomposition of $f_{T,w}$ into \eqref{eq:vw-def} and exchange the
finite sum and expectation:
\begin{align*}
v_w(S)
=
\E_{\mathbf{X}_{\ubar S}}\!\left[\sum_{\ell\in\mathcal L} w_\ell\,
\1\Big\{\big(\vec{x}_S,\mathbf{X}_{\ubar S}\big)\in R_\ell\Big\}\right] 
\\
=
\sum_{\ell\in\mathcal L} w_\ell\,
\E_{\mathbf{X}_{\ubar S}}\!\left[\1\Big\{\big(\vec{x}_S,\mathbf{X}_{\ubar S}\big)\in R_\ell\Big\}\right].
\end{align*}
The last expectation is exactly $q_\ell(S)$ from \eqref{eq:q-def}, which yields~\eqref{eq:vw-linear}. \QED
\end{proof}

Note that Proposition~\ref{prop:vw-linear} is a conditional statement: the reach probabilities $q_\ell(S)$ are treated as fixed quantities of a fixed tree. Equivalently, the uncertainty considered in this paper is propagated through the leaf outputs $w_\ell$, not through the routing probabilities used to re-estimate $q_\ell(S)$. This distinction is important for the uncertainty model introduced later. 

\subsubsection{Linearity of the attribution value \(\phi(w)\)}
Proposition~\ref{prop:vw-linear} implies that any linear attribution functional applied to the game $v_w$ is itself linear
in $w$. We state this explicitly for SHAP values.

\begin{proposition}\label{prop:shap-linear}
For each feature $i\in\U$, the SHAP value is linear in the leaf-value vector $w$:
\begin{equation}\label{eq:shap-leaf-linear}
\phi_i(w)=\sum_{\ell\in\mathcal{L}} a_{i,\ell}\,w_\ell=\langle a_i,w\rangle,
\end{equation}
where the coefficient vector $a_i=(a_{i,\ell})_{\ell\in\mathcal L}\in\mathbb R^{|\mathcal L|}$ is given by
\begin{equation}\label{eq:a-coeff}
a_{i,\ell}
=
\sum_{S\subseteq \U\setminus\{i\}}
\frac{|S|!\,(n-|S|-1)!}{n!}
\Big(q_\ell(S\cup\{i\})-q_\ell(S)\Big).
\end{equation}
In particular, $a_i$ depends only on the tree structure, the explained instance, the background distribution and the missing-feature semantics, and not on the leaf values $w$. Throughout the paper, we use SHAP value to refer to the Shapley value of the feature-attribution game induced by the value function \(v_w\) in (\ref{eq:interventional-game-bg}).
\end{proposition}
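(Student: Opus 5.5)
The plan is to substitute the leaf-linear representation of Proposition~\ref{prop:vw-linear} into the Shapley formula \eqref{eq:shapley-bg} and then exchange the two finite sums. Fix $i\in\U$. For each coalition $S\subseteq\U\setminus\{i\}$, Proposition~\ref{prop:vw-linear} applied to $S\cup\{i\}$ and to $S$ gives
\[
v_w(S\cup\{i\})-v_w(S)=\sum_{\ell\in\mathcal L}\big(q_\ell(S\cup\{i\})-q_\ell(S)\big)\,w_\ell .
\]
So each marginal contribution is a linear form in $w$, with coefficients that do not involve $w$.

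Next, I would plug this into \eqref{eq:shapley-bg} with $v=v_w$. Write $c_S:=|S|!\,(n-|S|-1)!/n!$ for the Shapley weights. Then
\[
\phi_i(w)=\sum_{S\subseteq\U\setminus\{i\}} c_S\sum_{\ell\in\mathcal L}\big(q_\ell(S\cup\{i\})-q_\ell(S)\big)w_\ell .
\]
Both sums are finite: there are $2^{n-1}$ coalitions and $|\mathcal L|$ leaves. Their order can therefore be swapped, and $w_\ell$ can be pulled out of the inner sum over $S$. This yields $\phi_i(w)=\sum_{\ell}a_{i,\ell}w_\ell$, with $a_{i,\ell}$ exactly as in \eqref{eq:a-coeff}. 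Linearity of $w\mapsto\phi_i(w)$ follows immediately, since $\langle a_i,\cdot\rangle$ is a linear functional.

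Finally, for the claim about what $a_i$ depends on: the weights $c_S$ depend only on $n$ and $|S|$. The reach probabilities $q_\ell(\cdot)$ were already observed after \eqref{eq:q-def} to depend only on the tree structure, the explained instance $\vec{x}$, the background distribution and the missing-feature semantics, and not on $w$. Hence $a_{i,\ell}$, built from these quantities through finite sums and differences, inherits the same dependence and is independent of $w$.

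There is no genuine obstacle here. The only point needing care is to state that the $q_\ell(S)$ are treated as fixed quantities of the fixed tree, as emphasized in the remark following Proposition~\ref{prop:vw-linear}, so that the coefficients are truly $w$-independent. The same argument with the weights $c_S$ replaced by $2^{-(n-1)}$ gives the analogous statement for the Banzhaf value.
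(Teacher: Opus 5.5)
Your proposal is correct and follows the paper's proof: apply Proposition~\ref{prop:vw-linear} to each marginal contribution $v_w(S\cup\{i\})-v_w(S)$, substitute into \eqref{eq:shapley-bg}, and exchange the two finite sums to read off $a_{i,\ell}$. Your added remarks on why $a_i$ does not depend on $w$, and on the Banzhaf analogue, match what the paper states around the proposition.
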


\begin{proof}
By Proposition~\ref{prop:vw-linear},
\[
v_w(S\cup\{i\})-v_w(S)
=
\sum_{\ell\in\mathcal L}
\big(q_\ell(S\cup\{i\})-q_\ell(S)\big)\,w_\ell.
\]
Substituting this expression into the Shapley formula \eqref{eq:shapley-bg} and exchanging the finite sums yields \(\phi_i(w) = \)
\[
\sum_{\ell\in\mathcal L}
\left[
\sum_{S\subseteq \U\setminus\{i\}}
\frac{|S|!\,(n-|S|-1)!}{n!}
\big(q_\ell(S\cup\{i\})-q_\ell(S)\big)
\right] w_\ell,
\]
which is exactly \eqref{eq:shap-leaf-linear}--\eqref{eq:a-coeff}. \QED
\end{proof}



\begin{remark}
Banzhaf value is also a linear combination of the marginal differences \(v_w(S\cup\{i\})-v_w(S)\), but with uniform weights. Thus, all our subsequent results apply to Banzhaf indices by replacing the Shapley coefficient \(a_{i,\ell}\) with the Banzhaf coefficient 
\begin{equation}
    b_{i,\ell} = \frac{1}{2^{n-1}}\sum_{S\subseteq \U\setminus \{i\}}\Big(q_\ell(S\cup\{i\})-q_\ell(S)\Big) .
\end{equation}
For this reason, we will focus on SHAP from now on. 
\end{remark}

The reach probabilities $q_\ell(S)$ are fixed once the tree structure, explained instance, background distribution, and missing-feature semantics are fixed. Therefore, once $(a_i)_{i\in\U}$ are computed, SHAP values can only vary according to parameters $w$, reducing the analysis to linear forms. This leaf-linear representation is the key ingredient behind the interval-valued robustness analysis developed in the next section.

\section{Interval-valued feature attribution under leaf uncertainty}\label{sec:leaf-perturbation}
We now study how to introduce uncertainty in the leaf parameters $w_\ell$, the target-class probabilities. Rather than considering arbitrary variations, we will connect those to virtual samples: this has the benefit that it is easy for the user to specify a number of additional virtual samples of unknown class and features, and see how SHAP values $\phi_i$ vary from~it.

\subsection{From virtual samples to leaf uncertainty}\label{subsec:idm-leaf-uncertainty}
Since the tree structure is fixed, uncertainty is introduced only through the leaf-output vector
\(w=(w_\ell)_{\ell\in\mathcal L}\). This can model retraining variability or moderate distribution shift affecting leaf estimates. For a target-class probability, let \(N_\ell\) be the cover of leaf \(\ell\), and let \(n_\ell\) be the number of samples of the target class in that leaf. The nominal leaf output is therefore \(w_\ell=\frac{n_\ell}{N_\ell}.\) Depending on the value of $N_\ell$, a small variation of $n_\ell$ can have a deep impact on the value of $w_\ell$. 


For this reason, it may be useful to check how sensitive our explanations are (taking the form of SHAP values) if we add a small number of instances to the leaf. A simple idea to do that is to add to leaf $\ell$ a number \(m_\ell\) of unclassified instances. This gives rise to the following possible interval of values $w_\ell$: 
\[
[\underline{w}_\ell,\overline{w}_\ell]=\left[\frac{n_\ell}{N_\ell+m_\ell},
\frac{n_\ell+m_\ell}{N_\ell+m_\ell}
\right],
\]
where the lower bound is obtained by considering that all $m_\ell$ instances do not belong to the target class, and the upper by considering that they all belong to the target class. These bounds can either be interpreted in a frequentist way, in which case they are derived from imprecise observations~\cite{dempster2008upper}, or in a Bayesian way, in which case they would correspond to the Imprecise Dirichlet Model (IDM) \cite{10.1111/j.2517-6161.1996.tb02065.x}. 

Note that it is difficult for a user to specify a meaningful number $m_\ell$ of unknown virtual instances per leaf. This is why we will consider that the user specifies a unique number \(s\) of virtual instances to be distributed among the tree leaves. Given this, an allocation of virtual instances will be a vector
\[
m=(m_\ell)_{\ell\in\mathcal L}\in\mathcal M(s)
:=
\left\{
m\in\mathbb N^{|\mathcal L|}
:
\sum_{\ell\in\mathcal L}m_\ell=s
\right\}.
\]
Note that the uncertainty level \(s\) has a more direct statistical interpretation than a uniform per-leaf allocation used in \cite{pmlr-v147-zhang21a}: it measures how many imprecise observations the explanation is required to tolerate, without assuming where these observations would fall in the tree. As we will see next, this makes the optimisation problems associated with the sensitivity analysis more complex, but we think it is much more understandable from the user's perspective, which is essential in explainability. 

\begin{example}\label{ex:idm}
We use a toy depth-2 tree with three split features and four leaves as a running example. Consider the allocation \(m=(1,2,1,2)\), corresponding to 6 unannotated instances. The resulting IDM intervals are reported in Table~\ref{tab:ex1}.
\begin{table}[htbp]
\vspace{-2em}
\centering
\caption{IDM leaf-output intervals.}
\label{tab:ex1}
\footnotesize
\renewcommand{\arraystretch}{1.15}
\begin{tabular}{c|ccccc}
\hline
leaf & \(n_\ell\) & \(N_\ell\) & \(w_\ell\) & \(m_\ell\)
& \([\underline{w}_\ell,\overline{w}_\ell]\) \\
\hline
\(\ell_1\) & 20 & 60 & \(1/3\) & 1
& \(\left[\frac{20}{61},\frac{21}{61}\right]\) \\
\(\ell_2\) & 20 & 50 & \(2/5\) & 2
& \(\left[\frac{20}{52},\frac{22}{52}\right]\) \\
\(\ell_3\) & 10 & 30 & \(1/3\) & 1
& \(\left[\frac{10}{31},\frac{11}{31}\right]\) \\
\(\ell_4\) & 30 & 50 & \(3/5\) & 2
& \(\left[\frac{30}{52},\frac{32}{52}\right]\) \\
\hline
\end{tabular}
\vspace{-2em}
\end{table}

\end{example}

\subsection{Interval-valued attributions}
In this section, we study how we can integrate the virtual samples in order to derive interval-valued attributions of each feature. We will consider two settings: one where we have no information about the distribution of virtual samples (unannotated samples) into the leaf, which we will call \emph{unknown allocation}, and one where we have an idea about how the virtual sample population is distributed within the leaves, which we will call \emph{stochastic allocation}. 

Before detailing each approach, recall that by Section~\ref{sec:leaf-linearity}, for each feature $i\in\U$ there exists a coefficient vector
$ a_i \in\R^{|\mathcal L|}$ such that
\[
\phi_i(w)=\langle a_i,w\rangle
\qquad\text{for all } w\in\R^{|\mathcal L|},
\]
meaning that $\phi_i$ is a linear function of $w$. 

\subsubsection{unknown allocation (pessimistic)}

We will consider the problem of deriving upper and lower bounds of $\phi_i$ given that any allocation can happen. Let
\begin{equation*}
    \mathrm{ran}(w) = \bigtimes_{\ell \in \mathcal{L}} [\underline{w}_\ell,\overline{w}_\ell].
\end{equation*}
Formally, we will aim at solving the following problems:
\begin{gather}
    \underline{\phi}_i^U=\inf_{m \in \mathcal{M}(s)} \inf_{w \in \mathrm{ran}(w)} \langle a_i,w\rangle \label{eq:unknown_low}, \\
       \overline{\phi}_i^U=\sup_{m \in \mathcal{M}(s)} \sup_{w \in \mathrm{ran}(w)} \langle a_i,w\rangle \label{eq:unknown_up}. 
\end{gather}
We refer to this as the \emph{pessimistic} principle: since the allocation is unknown, the interval is enlarged by considering the most adverse allocation for each endpoint. We will focus on solving the lower bound, and will show that the derived solution is analogous to the upper bound. Let us first focus on the inner optimization problem, which is straightforward to solve given its linearity
\begin{equation}\label{eq:inner_lower_shap}
\underline{\phi}_i^{inn}=\inf_{w \in \mathrm{ran}(w)} \langle a_i,w\rangle =\sum_{\ell\in\mathcal L}
\begin{cases}
a_{i,\ell}\,\underline w_\ell, & a_{i,\ell}\ge 0,\\
a_{i,\ell}\,\overline w_\ell, & a_{i,\ell}<0,
\end{cases}\end{equation}
So, the problem becomes
$$\underline{\phi}_i^U=\inf_{m \in \mathcal{M}(s)} \underline{\phi}_i^{inn}$$
and we now have to find the allocation $m$ for which the minimum will be reached. In order to do that, we will first show that Equation~\eqref{eq:inner_lower_shap} can be reformulated more conveniently.
\begin{proposition}\label{prop:inner_rewriting}
    Equation~\eqref{eq:inner_lower_shap} can be reformulated as
    $$\underline{\phi}_i^{inn}=\phi_i(w) - C^{\mathrm{wc}}_{\alpha^-_i}(m)$$
    where
\begin{align}\label{def:allocation_correction}
C^{\mathrm{wc}}_{\alpha^-_i}
&:=
\sum_{\ell\in\mathcal L}
\alpha^-_{i,\ell}\,
\frac{m_\ell}{N_\ell+m_\ell}, \\
\alpha^-_{i,\ell}
&:=
[a_{i,\ell}]_+\,w_\ell
+
[-a_{i,\ell}]_+\,(1-w_\ell),
\end{align}
and $[u]_+:=\max\{u,0\}$, is a correction term depending on~$m$. 
\end{proposition}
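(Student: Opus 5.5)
The plan is a direct leafwise computation: rewrite each term of the inner minimum as $a_{i,\ell}w_\ell$ minus a nonnegative correction, then sum using Proposition~\ref{prop:shap-linear}. For this, I will express both IDM endpoints as perturbations of the nominal value $w_\ell=n_\ell/N_\ell$. Using $n_\ell = w_\ell N_\ell$, we get
\[
\underline w_\ell=\frac{n_\ell}{N_\ell+m_\ell}
= w_\ell-\frac{n_\ell m_\ell}{N_\ell(N_\ell+m_\ell)}
= w_\ell - w_\ell\,\frac{m_\ell}{N_\ell+m_\ell},
\]
and similarly
\[
\overline w_\ell=\frac{n_\ell+m_\ell}{N_\ell+m_\ell}
= w_\ell + (1-w_\ell)\,\frac{m_\ell}{N_\ell+m_\ell},
\]
since $(n_\ell+m_\ell)N_\ell - n_\ell(N_\ell+m_\ell)=m_\ell(N_\ell-n_\ell)$. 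Both identities are checked by putting everything over the common denominator $N_\ell(N_\ell+m_\ell)$.

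Next, I will substitute these expressions into the case split of \eqref{eq:inner_lower_shap}, treating one leaf at a time.
\begin{itemize}
\item If $a_{i,\ell}\ge 0$, the term is
\[
a_{i,\ell}\underline w_\ell = a_{i,\ell}w_\ell - a_{i,\ell}w_\ell\frac{m_\ell}{N_\ell+m_\ell}.
\]
Here $a_{i,\ell}=[a_{i,\ell}]_+$ and $[-a_{i,\ell}]_+=0$, so the subtracted coefficient equals $\alpha^-_{i,\ell}$.
\item If $a_{i,\ell}<0$, the term is
\[
a_{i,\ell}\overline w_\ell = a_{i,\ell}w_\ell + a_{i,\ell}(1-w_\ell)\frac{m_\ell}{N_\ell+m_\ell}
= a_{i,\ell}w_\ell - [-a_{i,\ell}]_+(1-w_\ell)\frac{m_\ell}{N_\ell+m_\ell}.
\]
Here $[a_{i,\ell}]_+=0$, so again the subtracted coefficient is $\alpha^-_{i,\ell}$.
\end{itemize}
Thus in every case
\[
\text{term}_\ell = a_{i,\ell}w_\ell - \alpha^-_{i,\ell}\,\frac{m_\ell}{N_\ell+m_\ell}.
\]

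Finally, I will sum over $\ell\in\mathcal L$. By Proposition~\ref{prop:shap-linear}, $\sum_\ell a_{i,\ell}w_\ell=\phi_i(w)$, evaluated at the nominal leaf values. The remaining sum is exactly $C^{\mathrm{wc}}_{\alpha^-_i}(m)$, which gives the claim.

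There is no real obstacle here. The only delicate points are sign bookkeeping at $a_{i,\ell}=0$, where both cases give a zero term, and stating clearly that $w$ in $\phi_i(w)$ means the nominal estimates rather than the optimization variable. It is also worth noting that $\alpha^-_{i,\ell}\ge 0$, because $w_\ell\in[0,1]$, so the correction is nonnegative. That observation is what the subsequent minimization over $m$ will exploit.
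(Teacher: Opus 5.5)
Your proposal is correct and follows essentially the same route as the paper: you rewrite $\underline w_\ell = w_\ell - w_\ell\frac{m_\ell}{N_\ell+m_\ell}$ and $\overline w_\ell = w_\ell + (1-w_\ell)\frac{m_\ell}{N_\ell+m_\ell}$, substitute these into the case split of \eqref{eq:inner_lower_shap}, and identify the leftover sum as $C^{\mathrm{wc}}_{\alpha^-_i}(m)$. Your explicit use of $a_{i,\ell}=-[-a_{i,\ell}]_+$ in the negative case, and your note that $\phi_i(w)$ is evaluated at the nominal leaf values, make the sign bookkeeping slightly more transparent than in the paper's version.
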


\begin{proof}
    First note that we can rewrite 
    $$\underline w_\ell= w_\ell- w_\ell\frac{m_\ell}{N_\ell+m_\ell}.$$
    Indeed, we have \(w_\ell- w_\ell\frac{m_\ell}{N_\ell+m_\ell}\)
    \begin{align*}
         & = \frac{n_\ell}{N_\ell}-\frac{n_\ell}{N_\ell}\frac{m_\ell}{N_\ell+m_\ell} 
         =  \frac{n_\ell (N_\ell + m_\ell) - n_\ell m_\ell}{N_\ell(N_\ell+m_\ell)} \\ &=\frac{N_\ell n_\ell}{N_\ell(N_\ell + m_\ell)} =\frac{n_\ell}{N_\ell + m_\ell} = \underline w_\ell. 
    \end{align*}
    Likewise, we can rewrite 
    $$\overline w_\ell =  w_\ell+(1- w_\ell)\frac{m_\ell}{N_\ell+m_\ell}.$$
    Plugging those rewrittings in Eq.~\eqref{eq:inner_lower_shap}, we get 
\begin{align}
        \underline{\phi}^{inn}_i &=\sum_{\ell \in \mathcal{L}} w_\ell a_{i,\ell}+ \begin{cases}
- a_{i,\ell} w_\ell \frac{m_\ell}{N_\ell+m_\ell}, & a_{i,\ell}\ge 0,\\
a_{i,\ell}(1- w_\ell)\frac{m_\ell}{N_\ell+m_\ell}, & a_{i,\ell}<0. 
\end{cases}    \\
& = \phi_i(w)
-
\sum_{\ell\in\mathcal L}
\alpha^-_{i,\ell}\,
\frac{m_\ell}{N_\ell+m_\ell} \\
&=\phi_i(w) - C^{\mathrm{wc}}_{\alpha^-_i}(m).
\end{align}
\QED
\end{proof}

So, now, all we have to do is to solve 
\begin{align*}
\underline{\phi}_i^U& =\inf_{m \in \mathcal{M}(s)} \phi_i(w) - C^{\mathrm{wc}}_{\alpha^-_i}(m) = \phi_i(w) - \sup_{m \in \mathcal{M}(s)} C^{\mathrm{wc}}_{\alpha^-_i}(m) \\
&= \phi_i(w) - C^{\mathrm{wc}}_{\alpha^-_i}(s),
\end{align*}
which means we need to find the allocation maximizing $C^{\mathrm{wc}}_{\alpha^-_i}(m)$. We first note that 
\[
C^{\mathrm{wc}}_{\alpha^-_i}(m)= \sum_{\ell\in\mathcal L}
\alpha^-_{i,\ell}\,
\frac{m_\ell}{N_\ell+m_\ell} = \sum_{\ell\in\mathcal L}
\alpha^-_{i,\ell}\ g_l(m_\ell)
\]
is a separable function. Moreover, it is easy to check that each function $g_l$ is concave so that $C^{\mathrm{wc}}_{\alpha^-_i}(m)$ is also concave. Finding the optimal $m$ is thus a resource allocation problem which can be solved using a greedy approach, the steps of which can be summarized in the following algorithm:

\begin{algorithm}
\begin{algorithmic}[1]
\caption{Optimal allocation of virtual samples}
\label{alg:greedy-exact}
\State Start with $m_l=0$ for all $l$
\State Compute for each $l$ the marginal gain: 
    \begin{align}
\Delta_\ell(m_\ell + 1)
& :=
\alpha^-_{i,\ell}
\left(
\frac{m_\ell + 1}{N_\ell+m_\ell + 1}
-
\frac{m_\ell}{N_\ell+m_\ell}
\right) \nonumber \\
& =
\alpha^-_{i,\ell}
\frac{N_\ell}{(N_\ell+m_\ell+1)(N_\ell+m_\ell)};
\end{align}
\State  Assign a new sample to the leaf with the largest gain;
\State Repeat this sequence $s$ times.
\end{algorithmic}
\end{algorithm}

Algorithm~\ref{alg:greedy-exact} runs in \(O(s|\mathcal L|)\) time with a direct implementation. The optimization problem for the upper bound can be treated likewise, since if we define 
\begin{equation}\label{eq:upper_decomposition}
\alpha^+_{i,\ell}
:=
[a_{i,\ell}]_+\,(1-w_\ell)
+
[-a_{i,\ell}]_+\,w_\ell,
\end{equation}

Then the inner supremum can be rewritten as 
\[
\overline\phi_i^{inn}
=
\phi_i(w)
+
\sum_{\ell\in\mathcal L}
\alpha^+_{i,\ell}\,
\frac{m_\ell}{N_\ell+m_\ell}
=\phi_i(w)
+
 C^{\mathrm{wc}}_{\alpha^+}(m)
 \]
which is of a form similar to Proposition~\ref{prop:inner_rewriting}, except that we have $\overline\phi_i^{inn}=\phi_i(w) + C^{\mathrm{wc}}_{\alpha^+_i}(m)$, with the need to find the allocation maximising the second term, meaning we can apply Algorithm~\ref{alg:greedy-exact} to solve it.

In practice, the greedy allocation used to compute the lower correction and the one used to compute the upper correction need not coincide; moreover, both allocations may depend on the feature \(i\). 

\subsubsection{Stochastic allocation (averaging)}

An alternative is to model the allocation probabilistically, in which case, assuming that $\mathbb{P}(m)$ is the probability of a given allocation, we have 
\begin{align}
    \underline{\phi}_i^S=\sum_{m \in \mathcal{M}(s)} \mathbb{P}(m) \inf_{w \in \mathrm{ran}(w)} \langle a_i,w\rangle := \mathbb E_M[\underline\phi^{inn}_i(M)]\label{eq:stoch_low}, \\
   \overline{\phi}_i^S=\sum_{m \in \mathcal{M}(s)} \mathbb{P}(m)  \sup_{w \in \mathrm{ran}(w)} \langle a_i,w\rangle := \mathbb E_M[\overline\phi^{inn}_i(M)]\label{eq:stoch_up}, 
\end{align}
We refer to this as the \emph{averaging} principle: instead of optimizing over all feasible allocations, the attribution endpoints are averaged with respect to the allocation distribution \(\mathbb{P}(m)\).
We will consider the case where 
\(
M=(M_\ell)_{\ell\in\mathcal L}\sim \mathrm{Multinomial}(s,\pi),
\) is a Multinomial and 
where $\pi=(\pi_\ell)_{\ell\in\mathcal L}\in\Delta^{|\mathcal L|-1}$ is the leaf-placement distribution. 
Then each marginal satisfies
\(
M_\ell\sim \mathrm{Binomial}(s,\pi_\ell).
\)

Using our previous decompositions of $\underline{\phi}_{inn}$ and $\overline{\phi}_{inn}$, we~get 
\begin{equation}
\begin{aligned}\label{eq:exp-bounds}
\mathbb E_M[\underline\phi^{inn}_i(M)]
=
\phi_i(w)
-
C^{\mathrm{exp}}_{\alpha_i^-}(s), \\
\mathbb E_M[\overline\phi^{inn}_i(M)]
=
\phi_i(w)
+
C^{\mathrm{exp}}_{\alpha_i^+}(s).
\end{aligned}
\end{equation}
where the expected correction for any nonnegative coefficient vector can be rewritten for
\(\mathrm{sign} \in \{-,+\}\) is
\[
C^{\mathrm{exp}}_{\alpha^{\mathrm{sign}}_i}(s)
=
\mathbb E_M[C^{\mathrm{exp}}_{\alpha^{\mathrm{sign}}_i}(M)]
=
\sum_{\ell\in\mathcal L}
\Bigl(
\mathbb E_M\!\left[
\frac{M_\ell}{N_\ell+M_\ell}
\right]
\alpha^{\mathrm{sign}}_{i,\ell}
\Bigr),
\]
thanks to expectation linearity, and 
where
\begin{equation}\label{eq:binomial_leaf}
\mathbb E_M\!\left[
\frac{M_\ell}{N_\ell+M_\ell}
\right]
=
\sum_{r=0}^s
\frac{r}{N_\ell+r}
\binom{s}{r}
\pi_\ell^r(1-\pi_\ell)^{s-r}.
\end{equation}
due to our multinomial assumption. 

These expected endpoints $\underline{\phi}^S_i,\overline{\phi}^S_i$ characterize average attribution uncertainty under the placement law \(\pi\). They should be distinguished from the worst-case endpoints $\underline{\phi}^U_i,\overline{\phi}^U_i$: the latter optimize over all allocations, whereas the former average over allocations sampled from a specified distribution. Since the correction terms are nonnegative,
the expected interval is no wider than the corresponding worst-case envelope. In practice, one may choose \(\pi\) to be, for example, uniform over leaves, \(\pi_\ell=1/|\mathcal L|\), or proportional to empirical leaf mass, \(\pi_\ell=N_\ell/\sum_j N_j\), depending on the intended interpretation. 

\vspace{0.2em}
\begin{example}\label{ex:allocation}
We continue Example~\ref{ex:idm}. Consider the lower correction for a fixed feature \(i\), and suppose that the coefficients in the correction term (\ref{def:allocation_correction}) are \(\alpha=(0.60,0.45,0.30,0.50).\)
In Table~\ref{tab:allocation-computation-example}, under unknown allocation, the adversary chooses the allocation maximizing \(C^{\mathrm{wc}}_{\alpha^-_i}(m)\). Starting from \(m=(0,0,0,0)\), the greedy rule assigns each new virtual sample to the leaf with the largest current marginal gain \(\Delta_\ell\). Under stochastic allocation, we instead assume \(M\sim\mathrm{Multinomial}(6,\pi)\), with empirical leaf masses
\(
\pi=
\left(
\frac{60}{190},
\frac{50}{190},
\frac{30}{190},
\frac{50}{190}
\right).
\)
As expected, \(C_{\alpha^-_i}^{\mathrm{wc}}(6)\) is greater than~\(C_{\alpha^-_i}^{\mathrm{exp}}(6)\).

\begin{table}[htbp]
\vspace{-1em}
\centering
\caption{Worst-case and stochastic allocation computations.}
\label{tab:allocation-computation-example}
\scriptsize
\renewcommand{\arraystretch}{1.15}
\begin{minipage}{0.48\linewidth}
\centering
\begin{tabular}{c|c|c|c}
\hline
step & leaf & \(\Delta_\ell\) & \(m\) \\
\hline
1 & \(\ell_1\) & 0.00984 & \((1,0,0,0)\)\\
2 & \(\ell_4\) & 0.00980 & \((1,0,0,1)\)\\
3 & \(\ell_3\) & 0.00968 & \((1,0,1,1)\)\\
4 & \(\ell_1\) & 0.00952 & \((2,0,1,1)\)\\
5 & \(\ell_4\) & 0.00943 & \((2,0,1,2)\)\\
6 & \(\ell_1\) & 0.00922 & \((3,0,1,2)\)\\
\hline
\multicolumn{4}{c}{\(C_{\alpha_i^-}^{\mathrm{wc}}(6) \approx 0.0575\)}\\
\end{tabular}
\end{minipage}
\hfill
\begin{minipage}{0.48\linewidth}
\centering
\begin{tabular}{c|c|c}
\hline
leaf & \(\pi_\ell\) & \(\mathbb E\!\left[\frac{M_\ell}{N_\ell+M_\ell}\right]\)\\
\hline
\(\ell_1\) & \(60/190\) & 0.0303\\
\(\ell_2\) & \(50/190\) & 0.0302\\
\(\ell_3\) & \(30/190\) & 0.0298\\
\(\ell_4\) & \(50/190\) & 0.0302\\
\hline
\multicolumn{3}{c}{\(C_{\alpha_i^-}^{\mathrm{exp}}(6) \approx 0.0558\)}\\
\end{tabular}
\end{minipage}
\vspace{-2em}
\end{table}

\end{example}

\subsection{Extensions to tree-ensemble}

We now discuss how the leaf-uncertainty framework extends to tree ensembles. Let $\mathcal{T} = \{T_1, \ldots T_T\}$ be the set of $T$ trees. Let \(
\tau_{\mathcal{T}}=(\ell^1,\ldots,\ell^T)
\) be the set of leaves (one per tree) that cover $\vec{x}$.  The ensemble prediction is

\begin{align}\label{eq:ensemble output}
F_{\mathcal{T},w}(x)=\sum_{t=1}^T \eta_t f_{t,w^{(t)}}(\vec{x})
\end{align}
where \(f_{t,w^{(t)}}(x)\) is the prediction of the \(t\)-th tree, \(w^{t}\) is its leaf-output
vector, \(\eta_t\) is its weight. For a random forest, typically \(\eta_t=1/T\); for gradient-boosted trees such as XGBoost, the weights include the learning-rate scaling.

By linearity of attributions in the model output, the single-tree leaf-linear representation extends directly to ensembles:
\begin{align}
\phi_i(F_{\mathcal{T},w})
&=
\sum_{t=1}^T \eta_t \phi_i(f_{t,w^{(t)}})
=
\sum_{t=1}^T \eta_t \langle a_i^{(t)},w^{(t)}\rangle \nonumber\\
&=
\langle a_i^{\mathrm{ens}},w^{\mathrm{ens}}\rangle, \label{eq:attribution in ensemble}
\end{align}
where \(w^{\mathrm{ens}}\) is the concatenation of all leaf-output vectors and \(a_i^{\mathrm{ens}}\) is the corresponding concatenated coefficient vector. Thus, under fixed tree structures, ensemble attributions remain linear in the collection of leaf outputs.

Let \(
\mathcal L_{\mathcal{T}}=\mathcal L_1\times\cdots\times\mathcal L_{\mathcal T}
\). The ensemble $\mathcal{T}$ can be interpreted as a (possibly very) large tree, whose leaves are non-empty intersections:   

\begin{equation}\label{eq:leaf in ensemble}
   \ell_{\mathcal{T}} = \cap_{t=1}^T \ell_t \neq \emptyset \,, \forall \tau_{\mathcal{T}} \in L_{\mathcal{T}}. 
\end{equation}
For each leaf $\ell_{\mathcal{T}}$, the attribution is defined using \eqref{eq:attribution in ensemble}. Therefore, one can, in principle, convert the ensemble into a tree with the leaf output \eqref{eq:ensemble output} and compute interval-valued attribution following either the pessimistic or averaging principle. However, constructing the tree can be costly.  

We therefore propose to proceed with an approximate approach. When proceeding with the pessimistic approach, we relax the requirement that the leaf $\ell_{\mathcal{T}}$ \eqref{eq:leaf in ensemble} should be non-empty and define the attribution using \eqref{eq:attribution in ensemble} even if the leaf is empty. The greedy algorithm proposed in Algorithm \ref{alg:greedy-exact} can be easily extended to find an allocation on the (possibly empty) leaves of the ensemble that is optimal under the pessimistic principle. Technically, we start from $m_\ell^t = 0$, for any $\ell$ and $t$, and repeatedly assign the next example to elements of a set of leaves $\tau_{\mathcal{T}}$ with the highest marginal gains. At each iteration, this set of leaves can be found efficiently. Thanks to the linearity of the attribution \eqref{eq:attribution in ensemble}, we can look at each tree separately and select the leaf with the highest marginal gain on that tree. Since the marginal gains achieved during the allocation procedure can come from invalid, i.e., empty, leaves of the ensemble, the interval-valued attributions given by this approximate approach are an outer approximation of the interval-valued attributions produced by converting the ensemble into a tree and then proceeding with the pessimistic~principle.

One can, of course, think of an approximate approach to the averaging principle by averaging the ensemble correction over all the (possibly empty) leaves $\ell_{\mathcal{T}}$ of the ensemble. This will be equivalent to averaging the corrections of the leaves of the trees. However, the interval-valued attributions are not necessarily either the attributions provided by the exact approach or their outer approximation. 

\subsection{Discussion}
In many applications, feature attributions are mainly used through rankings rather than through their exact numerical values. It is therefore natural to ask whether such rankings remain stable under leaf uncertainty. A direct comparison of individual attribution intervals can be misleading, since the lower and upper endpoints of different features may be attained under different leaf-output vectors. For ranking stability, it is more appropriate to compare two features under the same leaf vector \(w\). For two features \(i\) and \(j\), define the pairwise attribution margin
\(
\gamma_{ij}(w)
:=
\phi_i(w)-\phi_j(w)
=
\langle a_i-a_j,w\rangle .
\)
Since \(\gamma_{ij}\) is again linear in the leaf-output vector, the same interval-valued analysis applies by replacing the attribution coefficient \(a_i\) with the margin coefficient \(a_i-a_j\). 

Given a nominal ranking \(i_{(1)},\ldots,i_{(n)}\), one can test the stability of the whole ranking by checking the adjacent margins 
\(
\gamma_{i_{(k)} i_{(k+1)}}(w), \; k=1,\ldots,n-1.
\)
If all adjacent lower margins are positive, then the complete ranking is robust at uncertainty level \(s\). If some adjacent comparisons are ambiguous, the method may still certify a coarser tiered ranking. For example, even if features \(1\) and \(2\) cannot be robustly ordered, both may remain certified above feature \(3\), yielding the stable statement
\(
\{1,2\}\succ_s \{3\}.
\)
This tiered interpretation is often sufficient when the goal is to identify a robust set of top features rather than a fully ordered ranking.

This ranking viewpoint also provides a practical interpretation of \(s\). In the original IDM, \(s\) represents prior strength, and Walley \cite{10.1111/j.2517-6161.1996.tb02065.x} recommends small values, typically \(s=1\) or \(s=2\). In our setting, where \(s\) represents a virtual-sample uncertainty budget, ranking stability provides an application-oriented selection criterion: as \(s\) increases, pairwise ranking certificates become harder to maintain, and the robust ordering between two features, or between two groups of features, may eventually fail. For example, at the dataset level, one can fix the nominal top-\(k\) set of each instance and track the fraction of instances for which this set remains robustly top-ranked as \(s\) increases from zero. The largest \(s\) for which this fraction remains above a prescribed threshold then quantifies how much leaf uncertainty the top-\(k\) explanations can tolerate while maintaining the desired level of stability.

Since ranking certification is not the main focus of this work, we leave a detailed study of margin-based ranking stability for single trees and ensembles as future work. In the experiments below, whenever interval comparisons are needed, we use the simpler sufficient criterion
\(
\underline{\phi}_i > \overline{\phi}_j
\)
to certify that feature \(i\) is ranked above feature \(j\).

\section{Experiments}\label{sec:experiments}

We evaluate the proposed interval-valued attribution framework from the viewpoint of a practitioner who uses feature-importance explanations to interpret tree-based predictions. The objective is not only to show how to interpret and robustify the framework, but also to provide empirical evidence for potential applications. The source code is available in the repository \href{https://github.com/Chenrui-ZHU/Interval-valued-SHAP}{GitHub}. We organize the experiments around the following questions:

\begin{enumerate}
    \item For a fixed instance, which nominally important features remain robust under a given uncertainty level, and which ones become fragile? \label{ques:local}
    \item Does the proposed interval-valued attribution agree with empirical attribution variability under bootstrap model retraining?\label{ques:bootstrap}
    \item How does interval-valued attribution vary between confident and ambiguous decision regions?\label{ques:width-vs-decision}
    \item Can the pessimistic principle reduce the apparent importance of uninformative features?\label{ques:debias}
\end{enumerate}

Since our robustness notions are defined for a single explained instance, the primary analysis is local. We first examine individual predictions to show how interval-valued attributions distinguish robust from fragile feature-importance claims. We then aggregate the same instance-level quantities over the test set. These summaries provide a dataset-level view of attribution robustness while preserving the local nature of the proposed interval-valued SHAP. 

\subsection{Experimental protocol}

We first use decision trees for Question~\ref{ques:local}, where the proposed uncertainty model is clearly interpretable. We then consider random forests for the remaining questions,  in order to evaluate how the robustness analysis behaves in ensemble settings. We use four public tabular binary-classification datasets: \emph{Diabetes} (Diab.), \emph{Breast Cancer} (BreC.), \emph{Ionoshpere} (Ions.), \emph{Mortality} (Mort.). The first three datasets are available from \href{https://www.openml.org/search?type=data&sort=runs&status=active}{OpenML}. The \emph{Mortality} dataset is the case study from \cite{lundberg2019explainable}. The summary of the datasets is shown in Table~\ref{tab:dataset-summary}.

\begin{table}[htbp]
\centering
\caption{Summary of datasets}
\label{tab:dataset-summary}
\begin{tabular}{c|cccc}
\hline
 & Diab. & BreC. & Ions. & Mort. \\
\hline
Features  & 8       & 30      & 34          & 79 \\
Instances & 768     & 569     & 351       & 14264 \\
\hline
\end{tabular}
\end{table}


\vspace{-0.3em}

For each dataset, we use a 5-fold cross-validation protocol. 
Model hyperparameters are selected on the training set using 3-fold nested cross-validation, with balanced accuracy as the selection criterion. For both decision trees and random forests, we tune the minimum number of samples per leaf over \(\{5, 10, 20\}\). Random forests use 100 trees. We then compute instance-level SHAP intervals on the test set and report averages over the five folds. Experiments were run on Ubuntu 24.04 using a machine with 80 CPU cores (2 \(\times\) Intel(R) Xeon(R) Gold 6138 CPU @ 2.00GHz).


\subsection{Case study}\label{sec:case_study}
First, we provide a local understanding of our proposed robustness framework to answer Questions~\ref{ques:local}. We present a case study on selected instances to see whether SHAP values and their induced rankings remain stable. Specifically, from a user perspective, the key takeaways are:
(i) \emph{dominance}, where a feature remains clearly more influential than another because their intervals are separated; (ii) \emph{incomparability}, where the relative order of two features is ambiguous because their intervals overlap; and (iii) \emph{robust top-k}, where a group of features dominates all features outside the group even if the ordering inside the group is ambiguous.

We illustrate first how to interpret interval-valued SHAP scores on the \emph{Diabetes} dataset, which predicts whether a patient is diabetic from eight clinical variables: pregnancies (\texttt{preg}), plasma glucose (\texttt{plas}), blood pressure (\texttt{pres}), skin-fold thickness (\texttt{skin}), insulin (\texttt{insu}), BMI (\texttt{mass}), diabetes pedigree (\texttt{pedi}), and age (\texttt{age}). Their direct clinical meaning makes this dataset suitable for interpretation.

We train a decision tree of depth (6), achieving a test accuracy of (0.76). Table~\ref{tab:instances} presents two representative patients, one predicted non-diabetic and one predicted diabetic. Table~\ref{tab:two-patient-intervals} reports their signed nominal SHAP values and the corresponding magnitude-based intervals at (s=2) under the pessimistic and averaging principles.

\begin{table}[t]
\centering
\caption{Representative instances from the Diabetes dataset, their nominal SHAP values, and magnitude-based intervals. \((+)\) and \((-)\) indicate support for and opposition to the diabetic prediction, respectively}
\label{tab:diabetes_case_study}

\begin{subtable}{0.5\textwidth}
\centering
\caption{Representative instances.}
\label{tab:instances}
\begin{tabular}{c|cccccccc}
\toprule
sample & preg & plas & pres & skin & insu & BMI & pedi & age \\
\midrule
1 & 2 & 108 & 62 & 32 & 56 & 25.2 & 0.128 & 21 \\
2 & 5 & 166 & 72 & 19 & 175 & 25.8 & 0.587 & 51 \\
\bottomrule
\end{tabular}
\end{subtable}

\vspace{0.5em}

\begin{subtable}{0.5\textwidth}
\centering
\caption{Nominal SHAP values and magnitude-based intervals.}
\label{tab:two-patient-intervals}
\renewcommand{\arraystretch}{1.05}
\setlength{\tabcolsep}{3pt}
\begin{tabular}{c|c|ccc}
\toprule
sample & feature & nominal & pessimistic & averaging\\
\midrule
\multirow{6}{*}{1}
 & plas & (-) 0.137 & $[0.133,0.141]$ & $[0.135,0.138]$\\
 & BMI  & (-) 0.130 & $[0.122,0.135]$ & $[0.128,0.131]$\\
 & age  & (-) 0.059 & $[0.053,0.073]$ & $[0.057,0.060]$\\
 & pedi & (-) 0.021 & $[0.019,0.022]$ & $[0.021,0.021]$\\
 & preg & (-) 0.004 & $[0.003,0.005]$ & $[0.004,0.004]$\\
 & pres & (+) 0.003 & $[0.002,0.004]$ & $[0.003,0.003]$\\
\midrule
\multirow{6}{*}{2}
 & plas & (+) 0.357 & $[0.334,0.365]$ & $[0.353,0.359]$\\
 & pedi & (+) 0.108 & $[0.099,0.114]$ & $[0.107,0.109]$\\
 & BMI  & (-) 0.093 & $[0.084,0.095]$ & $[0.092,0.094]$\\
 & age  & (+) 0.029 & $[0.027,0.034]$ & $[0.028,0.030]$\\
 & preg & (-) 0.003 & $[0.002,0.003]$ & $[0.003,0.003]$\\
 & pres & (+) 0.001 & $[0.000,0.001]$ & $[0.001,0.001]$\\
\bottomrule
\end{tabular}
\end{subtable}
\end{table}

\paragraph{Patient 1 (predicted non-diabetic)}
The magnitude-based ranking indicate that low plasma glucose, lower BMI, and young age support the non-diabetic prediction. Under the pessimistic principle, the intervals for \texttt{plas} and \texttt{BMI} overlap, so their internal order is ambiguous: either feature may become the most important. By contrast, both remain clearly separated from all remaining features. Hence, while the exact top-1 feature is unstable, the top-2 set is robust, yielding the user-facing conclusion:
\emph{``glucose and BMI are the two dominant drivers of the prediction, while age has a secondary effect.''} Under the averaging principle, the intervals are narrower and the intervals for \texttt{plas} and \texttt{BMI} no longer overlap, preserving their nominal order. This comparison provides a more cautious explanation than the nominal ranking alone, because making a non-risky prediction requires relying on several consistently influential features rather than over-interpreting the exact order between \texttt{plas} and \texttt{BMI}. If \(s\) increases further, additional interval expansion may eventually cause \texttt{age} to overlap with \texttt{BMI}, weakening the robust top-2 interpretation.

\paragraph{Patient 2 (predicted diabetic)}
Here, plasma glucose and pedigree contribute positively to the diabetic prediction, whereas BMI contributes negatively and therefore acts as a countervailing factor. Unlike Patient~1, the leading feature is unambiguous: \texttt{plas} is robustly top-1, as its interval is well separated from all others. Moreover, the intervals for \texttt{pedi} and \texttt{BMI} remain separated under both principles, although the gap is small. The averaging principle produces narrower intervals but leads to the same ranking conclusion. This supports the interpretation: \emph{``the prediction is dominated by glucose, while the ranking of the secondary drivers might be fragile.''} This is also reasonable given the patient's high plasma glucose value, together with a relatively high pedigree value and a lower~BMI.

\subsection{Bootstrap-based validation of interval-valued attributions}\label{subsec:bootstrap-validation}

We now move from individual instances to a dataset-level analysis. For Question~\ref{ques:bootstrap}, we examine how bootstrap retraining variability correlates with the interval-valued attributions. More specifically, we ask whether larger attribution intervals induced by our method also tend to exhibit a larger variability under bootstrap retraining. As illustrated in the preceding case study and expected from the theoretical formulation, the \emph{pessimistic} principle generally produces wider intervals than the \emph{averaging} one, thereby providing a more conservative assessment of attribution uncertainty. For the sake of simplicity, we therefore focus on \emph{pessimistic} in the following experiments.

In this experiment, the number of bootstrap replicates is fixed to \(\mathcal B=100\) for single trees (T) and for random forests (RF). For each fold, we first train the model on the whole training set and compute, for each test instance \(\vec{x}\), a vector
\[
    \mathrm{len}(\vec{x}) = (\mathrm{len}_1, \ldots, \mathrm{len}_n),
\]
where \(\mathrm{len}_i := \overline{\phi}_i - \underline{\phi}_i\) is the \emph{proposed attribution interval length}. Inside each fold, for each bootstrap replicate \(b \in \{1,\cdots, \mathcal B\}\), we retrain the model on a bagging-style bootstrap sample (sampling observations uniformly with replacement) of the training set and compute the nominal attribution \(\phi^{(b)}_i\) for the same instance. A bootstrap-based vector is
\[
    \mathrm{len}^{\mathrm{boot}}(\vec{x}) = (\mathrm{len}^{\mathrm{boot}}_1, \ldots, \mathrm{len}^{\mathrm{boot}}_n),
\]
where $\mathrm{len}^{\mathrm{boot}}_i := \max_{1 \leq b \leq \mathcal B}{\phi}^{(b)}_i - \min_{1 \leq b \leq \mathcal B}{{\phi}^{(b)}_i}$ is the \emph{bootstrap-based attribution interval length} by the empirical min-max. Then we compute the Spearman correlation \cite{KING201923} of the two vectors given by the two approaches. This correlation evaluates whether the two approaches rank attribution length similarly according to attribution variability. The reported values are obtained by averaging the instance-level correlations over all test instances and folds. 

The idea is to see if the proposed interval-valued attributions are aligned with the empirical variability produced by the bootstrap retraining. 
The results given in Table~\ref{tab:bootstrap-correlation} confirm this relation. Across \(s\in\{1,2,3\}\), the correlations varied by at most \(0.01\) over all dataset–model combinations, showing that the observed association is not specific to a certain s. So we only report the results for \(s=2\).
The correlations are moderately high for single trees and high for random forests. Since the proposed intervals avoid repeated model retraining, they provide a computationally convenient proxy for assessing bootstrap attribution sensitivity.

\begin{table}[htbp]
\centering

\begin{minipage}[t]{0.48\linewidth}
\centering
\caption{Correlation between SHAP interval lengths and bootstrap variability.}
\label{tab:bootstrap-correlation}
\begin{tabular}{ccc}
\toprule
Dataset & Model & Spearman \\
\midrule
Diab. & T  & $0.57$ \\
Diab. & RF & $0.83$ \\
BreC. & T  & $0.36$ \\
BreC. & RF & $0.92$ \\
Ions. & T  & $0.40$ \\
Ions. & RF & $0.80$ \\
Mort. & T  & $0.55$ \\
Mort. & RF & $0.90$ \\
\bottomrule
\end{tabular}
\end{minipage}
\hfill
\begin{minipage}[t]{0.48\linewidth}
\centering
\caption{Correlation between the margin-based certainty score
\(\textit{mar}(x)\) and pessimistic SHAP interval length.}
\label{tab:certainty-width-correlation}
\begin{tabular}{cc}
\toprule
Dataset & Spearman \\
\midrule
Diab. & $-0.74$ \\
BreC. & $-0.46$ \\
Ions. & $-0.65$ \\
Mort. & $-0.72$ \\
\bottomrule
\end{tabular}
\end{minipage}

\vspace{-1em}
\end{table}

\subsection{Attribution in certain and ambiguous regions}
To answer Question~\ref{ques:width-vs-decision}, we study whether attribution uncertainty is related to the model's local prediction certainty. Here, we use a simple operational proxy based on the predicted class probabilities. For an instance \(\vec{x}\), let \(\hat p_k(\vec{x})\) denote the predicted probability of class \(k\), and let \(\hat p_{(1)}(\vec{x})\) and \(\hat p_{(2)}(\vec{x})\) be the largest and second largest predicted probabilities.  We define the margin score
\(\textit{mar}(x) := \hat p_{(1)}(\vec{x})-\hat p_{(2)}(\vec{x})\). For binary classification, this reduces to \(\textit{mar}(\vec{x})=|\hat p_1(\vec{x})-\hat p_0(\vec{x})|=|2\hat p_1(\vec{x})-1|\). A large value of \(\textit{mar}(\vec{x})\) indicates that the prediction is confident, whereas a small value indicates that the instance lies in a more ambiguous region of the model. A single decision tree partitions the input space into discrete leaves and produces piecewise-constant class probabilities. As a result, its confidence scores are often too coarse to define meaningful certain and ambiguous regions based on the class-probability margin. In contrast, a random forest averages the outputs of many trees and typically yields a smoother range of confidence scores. We therefore do not use the same margin-based proxy for both model classes, and focus this analysis on random forests. Although this remains only a confidence-based proxy, it is natural in our setting: the confidence score is directly determined by leaf outputs in each tree, and by averaged leaf outputs in the forest.

For each instance \(\vec{x}\), we compare this margin score with the average attribution interval length across all features. We then compute the correlation between \(\textit{mar}(\vec{x})\) and the length over the test instances. Consistently with Section~\ref{subsec:bootstrap-validation}, we use \(s=2\) for the reported results. Table~\ref{tab:certainty-width-correlation} shows a clear negative association for random forests: instances with smaller margin scores tend to have wider attribution intervals, which is expected as predictions in more ambiguous regions are also more fragile from the viewpoint of feature attribution. 


\subsection{Application: debiasing uninformative features}




Tree-based feature-importance measures are known to favor high-cardinality variables, especially in finite samples. The reason is that such variables offer more candidate splits, and therefore, have more opportunities to produce spurious impurity reductions or unstable local contributions. This effect has been studied for MDI, where \cite{li2019debiased} show that noisy features can receive non-zero importance, especially in deep trees with small leaves, and propose \(\mathrm{MDI}^{\mathrm{oob}}\), which evaluates importance using out-of-bag samples. More recently, \cite{loecher2024debiasing} show that global SHAP scores can suffer from a similar bias and propose \(\mathrm{SHAP}^{\mathrm{Shrunk}}\), a sample-splitting correction that shrinks in-sample SHAP values according to their agreement with out-of-sample SHAP values. They also propose a smoothed \(\mathrm{SHAP}^{\mathrm{oob}}\) based on in-bag/out-of-bag samples. For Question~\ref{ques:debias}, we study whether the proposed interval-valued attribution can be used as a conservative debiasing proxy for reducing the apparent importance of uninformative features.

We follow the synthetic protocol of \cite{loecher2024debiasing}. The dataset contains five features \(X_i\) with different cardinalities: \(X_1\) is continuous and normally distributed while \(X_2,\cdots,X_5\) are categorical features with 2,4,10,20 categories, respectively. In the setting, only \(X_2\) is informative, with \(P(Y=1|X_2=1) = 0.5-\rho, \; P(Y=1|X_2=2) = 0.5+\rho\), where \(\rho \in [0,0.5]\) controls the signal strength. A higher $\rho$ indicates a higher importance level of $X_2$ on the values of the class variable $Y$. The remaining features are generated independently of the values of $Y$ and are considered uninformative. For each value of \(\rho\), we generate the prediction \(Y\), split the data into the train/test set with a 75\%/25\% ratio, and train the model on the training set. To isolate the effect of \(\rho\), we keep the feature values and the train/test split fixed across different values of \(\rho\), regenerate only \(Y\), and retrain the model. For our method, we fix \(s=2\).



\subsubsection{Global study}
Motivated by the same phenomenon observed in \cite{loecher2024debiasing}, we consider a global feature-level analysis. Instead of recomputing SHAP values on held-out or out-of-bag samples, we use the lower attribution magnitude induced by the pessimistic principle. Specifically, for feature \(i\), we define
\[ 
\mathrm{SHAP}_i^{\mathrm{Pess}}(s) = \frac{1}{J} \sum_{j=1}^J \inf_{\phi\in[\underline{\phi}^{U}_i(x_j;s),\overline{\phi}^{U}_i(x_j;s)]} |\phi|, 
\]
where \(J\) is the number of explained test instances. In other words, \(\mathrm{SHAP}^{\mathrm{Pess}}\) penalizes features whose apparent importance is not stable. It should not be interpreted as an out-of-sample validation score, since it does not directly test whether the attribution transfers to unseen samples. Rather, it provides a conservative proxy for attribution stability. Since high-cardinality variables can create small or unstable leaves, this proxy can capture part of the same finite-sample fragility targeted by sample-splitting corrections. Using the same signal-strength protocol, we compute nominal SHAP, \(\mathrm{SHAP}^{\mathrm{Pess}}\), and \(\mathrm{SHAP}^{\mathrm{Shrunk}}\) for each fold and average the results over folds. The results are shown in Figure~\ref{fig:debias}. We observe that \(\mathrm{SHAP}^{\mathrm{Pess}}\) reduces the importance assigned to uninformative high-cardinality features.

\begin{figure}[htbp]
    \centering
        \centering
        \includegraphics[width=\linewidth]{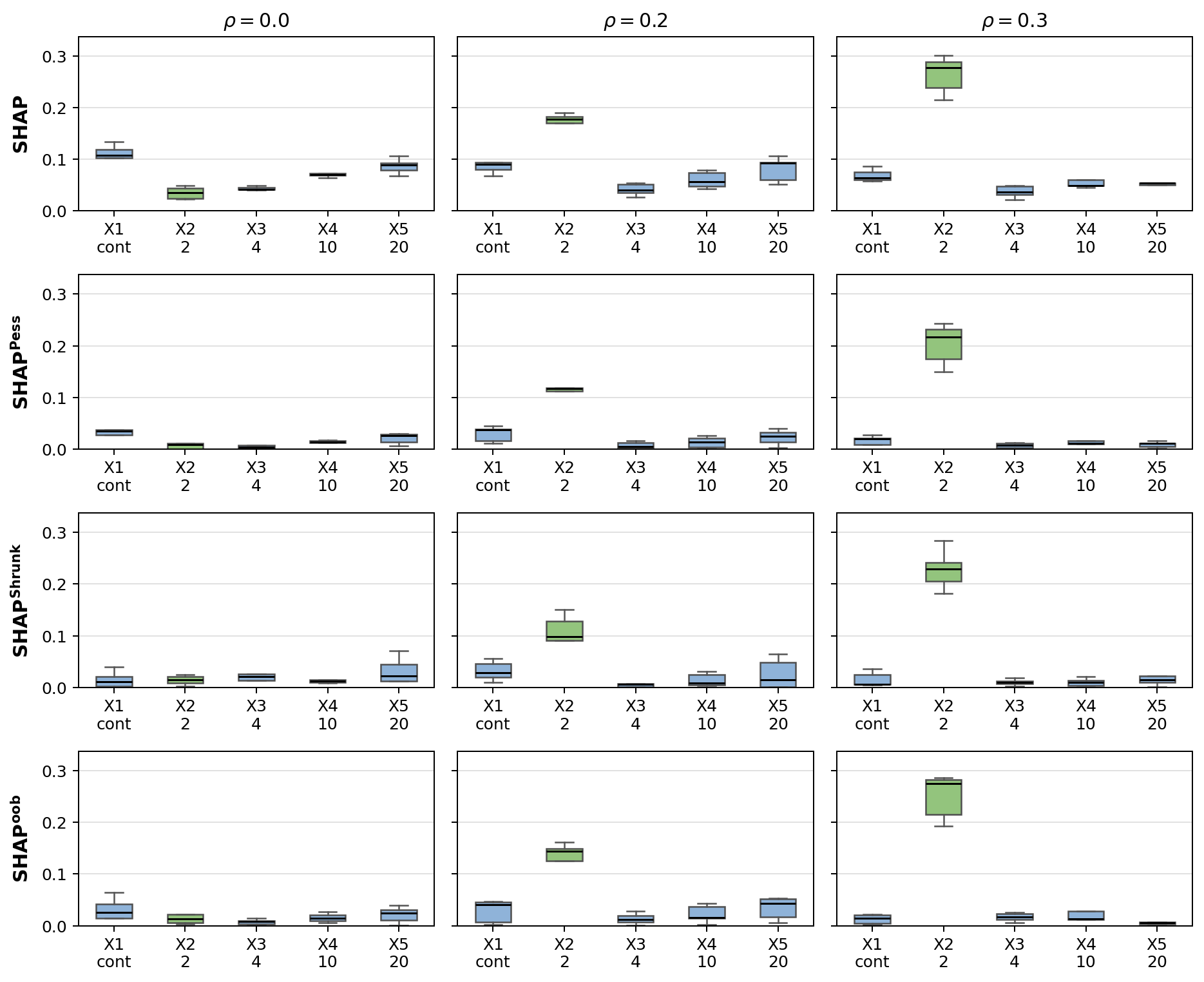}
        \label{fig:synthetic-data-debias}
    \caption{Comparison of (row) nominal SHAP, \(\mathrm{SHAP}^{\mathrm{Pess}}\), \(\mathrm{SHAP}^{\mathrm{Shrunk}}\), and \(\mathrm{SHAP}^{\mathrm{oob}}\), where only \(X_2\) is informative, for different signal strengths (column) \(\rho\in[0,0.3]\).}
    \label{fig:debias}
\end{figure}

The goal of this experiment is not to claim that the pessimistic principle fully removes all sources of high-cardinality bias. Rather, we ask whether the proposed intervals are sufficient to identify features whose importance is fragile. The results suggest that interval-valued SHAP can provide a tractable and interpretable debiasing proxy without retraining the model or recomputing SHAP on additional data splits.

\subsubsection{Feature ranking}
To further evaluate the global feature rankings as a classification task, we follow the feature-ranking setting of \cite{li2019debiased,loecher2024debiasing}, where they treat feature selection as a binary ranking problem since the informative features are known by construction: informative features are assigned label \(1\), and uninformative features are assigned label \(0\). This evaluation is useful in synthetic experiments because the informative set is controlled. However, in practical feature selection, declaring a feature completely uninformative is often too strong: a feature with small importance may still have a non-negligible contribution under the model mechanism. So we define a similar safe SHAP as
\[ 
\mathrm{SHAP}_i^{\mathrm{Safe}}(s) = \frac{1}{J} \sum_{j=1}^J \sup_{\phi\in[\underline{\phi}^{U}_i(x_j;s),\overline{\phi}^{U}_i(x_j;s)]} |\phi|, 
\]

In the experiment, we generate \(n=2000\) observations with \(50\) discrete features. The \(k\)-th feature \(X_k\) takes values in \(\{0,1,\ldots,k\}\), and therefore has \(k+1\) distinct categories. We randomly select a set \(\mathcal S\subseteq\{1,\ldots,10\}\) of five informative features, while the remaining features are uninformative. All features are generated independently. The binary response is then sampled according to
\[
P(Y=1\mid X)
=
\operatorname{Logistic}
\left(
\frac{2}{5}\sum_{k\in\mathcal S}\frac{X_k}{k}-1
\right).
\]
For each SHAP score, we compute the AUC for separating these two groups and then report the averaged AUC score over 5 repetitions. Equivalently, the AUC estimates the probability that a randomly chosen informative feature receives a larger importance score than a randomly chosen uninformative feature. Higher AUC therefore indicates better separation between true signal features and noisy features. The results are reported in Table~\ref{tab:debias-auc}. Overall, both \(\mathrm{SHAP}^{\mathrm{Pess}}\) and \(\mathrm{SHAP}^{\mathrm{Safe}}\) achieve competitive performance compared with other methods. The AUC of \(\mathrm{SHAP}^{\mathrm{Safe}}\) is slightly lower than that of \(\mathrm{SHAP}^{\mathrm{Pess}}\), which is expected. Indeed, \(\mathrm{SHAP}^{\mathrm{Safe}}\) has a different objective: instead of aggressively suppressing uncertain features, it is designed to retain features that may still be important under admissible perturbations. In this sense, it provides a safer criterion for feature selection, especially when removing a potentially informative feature would be undesirable.

\begin{table}[t]
\centering
\caption{Average AUC scores for separating informative from uninformative features.}
\label{tab:debias-auc}
\begin{tabular}{ccccc}
\toprule
SHAP & \(\mathrm{SHAP}^{\mathrm{Safe}}\) & \(\mathrm{SHAP}^{\mathrm{Pess}}\) & \(\mathrm{SHAP}^{\mathrm{Shrunk}}\) & \(\mathrm{SHAP}^{\mathrm{oob}}\)\\
\midrule
0.82 & 0.81 & 0.83 & 0.79 & 0.85 \\
\bottomrule
\end{tabular}
\vspace{-1em}
\end{table}

\section{Conclusion}
\label{sec:conclusion}

In this paper, we propose an IDM-based method to analyze the robustness of SHAP values in tree-based models. It quantifies interval-valued SHAP when a few virtual samples are introduced into the tree leaves. These intervals, defined under the pessimistic and averaging principles, can be computed efficiently. We present case studies and experiments illustrating their behavior and relationship with the empirical variability produced by bootstrap retraining. We also show how our method can better estimate the importance of uninformative features.

As future work, exact computation under both principles could be extended to tree ensembles. One approach is to translate an ensemble into a potentially large single tree and apply the exact solutions presented in Section \ref{sec:leaf-perturbation}. However, constructing this tree and handling its many leaves may be costly. Alternatively, the optimal allocation problem could be formulated as an integer nonlinear program. The case \(s=1\) may be handled efficiently using heuristic algorithms, whereas general values of \(s\) and large ensembles remain challenging.

\section*{Acknowledgments}

This work was partially supported by the Junior Professor Chair in Trustworthy AI \href{https://www.utc.fr/wp-content/uploads/sites/28/2022/07/gi-heudiasyc-cpj-fopc-0601223d-4156.pdf}{(Ref. ANR-R311CHD)}.

\bibliographystyle{elsarticle-num} 
\bibliography{refs.bib}



\vspace{12pt}

\end{document}